\theoremstyle{definition}
\newtheorem{definition}{Definition}
\newtheorem{theorem}{Theorem}
\newcommand{\E}{\mathbb{E}}
\newcommand{\score}{s}
\newcommand{\myitem}{i}
\newcommand{\inventory}{\mathcal{I}}
\newcommand{\stochasticoutcome}{Y}
\newcommand{\outcome}{y}
\newcommand{\pos}{p}
\newcommand{\mypos}{j}
\newcommand{\myweight}{w}
\newcommand{\group}{g}
\newcommand{\groupcategories}{\mathcal{G}}
\newcommand{\observeddataset}{\mathcal{D}}
\newcommand{\observeddatasets}{\mathcal{Q}}
\newcommand{\observeditems}{\mathcal{O}}
\newcommand{\myvalue}{V}
\newcommand{\matchedpairs}{MP}
\newcommand{\modification}{\alpha}
\newcommand{\expectation}[1]{\E[#1]}
\newcommand{\scoredpos}[1]{\pos_{#1}}
\newcommand{\scoreditempos}[2]{\scoredpos{#1}(#2)}
\newcommand{\scoreditemposinv}[2]{\scoredpos{#1}^{-1}(#2)}
\newcommand{\groupof}[1]{\group(#1)}
\newcommand{\itemingroup}{\myitem_{\group}}
\newcommand{\itemnotingroup}{\myitem_{\neg \group}}
\newcommand{\posweight}{\myweight_{\mypos}}
\newcommand{\scoreof}[1]{\score(#1)}
\newcommand{\altscore}{\tilde{\score}_{\modification}}
\newcommand{\altscoreof}[1]{\tilde{\score}_{\modification}(#1)}
\newcommand{\outcomeof}[1]{\outcome({#1})}
\newcommand{\stochasticoutcomeof}[1]{\stochasticoutcome({#1})}
\title{Matched Pair Calibration for Ranking Fairness}
\author[1,*]{Hannah Korevaar}
\author[1,*]{Chris McConnell}
\author[1]{Edmund Tong}
\author[1]{Erik Brinkman}
\author[1]{Alana Shine}
\author[1]{Misam Abbas}
\author[2]{Blossom Metevier}
\author[1]{Sam Corbett-Davies}
\author[1]{Khalid El-Arini}
\affil[1]{Meta Responsible AI}
\affil[2]{Autonomous Learning Lab, University of Massachusetts Amherst}
\affil[*]{Indicates equal contribution}
\begin{document}

\maketitle

\begin{abstract}
   We propose a test of fairness in score-based ranking systems called matched pair calibration.
   Our approach constructs a set of matched item pairs with minimal confounding differences between subgroups before computing an appropriate measure of ranking error over the set.
   The matching step ensures that we compare subgroup outcomes between identically scored items so that measured performance differences directly imply unfairness in subgroup-level exposures. We show how our approach generalizes the fairness intuitions of calibration from a binary classification setting to ranking and  connect our approach to other proposals for ranking fairness measures. 
   Moreover, our strategy shows how the logic of marginal outcome tests extends to cases where the analyst has access to model scores. Lastly, we provide an example of applying matched pair calibration to a real-word ranking data set to demonstrate its efficacy in detecting ranking bias. 
\end{abstract}

\section{Introduction}

Ranking is a key component within recommendation systems. Because viewer attention is limited, recommenders order content according to estimated relevance or value, with the intention of efficiently prioritizing items based on viewer preferences. However, this practice introduces the possibility of unfairly denying relevant items the appropriate exposure if the ranking model misvalues the content. When this error occurs systematically for a group, the consequence can be a broad denial of opportunity with repercussions for economic and social well-being. 

Over the past several years, researchers have proposed a number of measurements that can help detect fairness problems in ranking applications. Proposals include comparing a ranked list to a reference distribution \citep{GeyikEtAl,SinghJoachims}, measuring the frequency of misordering \citep{BeutelEtAl}, or comparing rewards for closely-located items \citep{RothEtAl}. This final approach, due to Roth, Saint-Jacques, and Yu, considers a sequence of (mathematical) inequality conditions that are violated when the ranker is improperly valuing content.
The inequality conditions in this method are a consequence of the constraints on the problem; the authors consider a setting in which an auditor has access to ranks, outcomes, and subgroups but not the underlying information (e.g., features and model scores) used to construct the ranking. Because of this, averaging outcomes over all items may fail to detect bias due to the problem of \emph{inframarginality}, where underlying variation in risk scores (or in our case, utility as defined by the ranking task) disguises bias and unlike items are improperly compared \citep{SamMismeasure}. The authors show the inequality condition is sharp, in the sense that any ranking that satisfies these conditions is potentially unbiased under some information distribution. While this test is valuable for its generality, it also raises the question of whether a small amount of additional structure might yield a more sensitive test.

Here we propose a method to assess whether a ranker is executing its task equally well for items (or item producers) of different groups. 
Specifically, we propose an approach to applying the outcome test of \citep{Becker} when it is possible to identify marginal ranking decisions. 
In our setting, we assume that the ranking is done via a scalar score $\score$, with item $\myitem$ being placed earlier in the ranking order than $\myitem'$ if $\scoreof{\myitem} > \scoreof{\myitem'}$. 
For example, if ranking on the basis of a single binary classification task, the score may be the model prediction; however, the score need not have this interpretation. (See \citep{fair_ranking_survey} for more discussion of fairness in such systems). Our approach involves a pre-processing step which constructs matched pairs of items which have similar scores and are associated with different subgroups. This matching process identifies ``marginal'' pairs for which the ordering is sensitive to small perturbations of the underlying model. After we generate the matched set, we compare across groups using an appropriate notion of loss, such as the difference in average item-level outcome. In a fair system, this loss should show no systematic tendency towards one group or another; in the event that it does, we have evidence of ranking decisions that are inconsistent with the stated goal of maximizing relevance. Given the connections to marginal outcome tests and calibration as applied to classification fairness, we refer to our strategy as \emph{matched pair calibration}.

The matching process has a number of benefits when searching for unfairness. First, it avoids the problem of inframarginality by focusing attention on ranking decisions for which the ranker is implied to be nearly indifferent between orderings. By exploiting this indifference condition, we are able to derive an equality constraint rather than the inequality constraint in \citep{RothEtAl}, which improves the ability of the test to detect bias in the ranking score. Second, it naturally balances other factors that could confound the connection between optimal ranking and aggregate measures of prediction accuracy, such as calibration when calculated over the marginal distribution of scores. Recent work \citep{FlavienEtAl} notes that associations between viewer-side prediction errors and differences in item-side relevance can complicate the relationship between aggregate measures of fairness and user-level measures, leading to cases where these might diverge. Matching balances query-level features by construction and creates comparison sets where these types of confounding factors are balanced, similar to matching techniques that are used in causal inference applications \citep{imbens_rubin_2015}. We further elaborate on this connection below, showing how Becker-style marginal outcome tests can be framed in terms of the causal impact of subgroup-level interventions.

In the following section, we mathematically describe the matched pair calibration approach. Next, we elaborate on the connections between matched pair calibration and marginal outcome tests and the analogies to causal inference. 
We then explore generalizations of this technique that could expand the set of applications where researchers could apply Becker-style fairness tests. Finally, we provide an example of measuring matched pairs calibration on the MovieLens dataset.

\section{Matched Pair Calibration}
\subsection{Problem Setting and Metric Definition}
A decision maker (DM) observes, for each query, an item inventory $\inventory$ that they are to place in a ranked order. 
Each item $i \in \mathcal I$ has a category group in set $\groupcategories$ which can be accessed via the map $\group : \inventory \rightarrow \groupcategories$. 
The group category $\groupof{\myitem}$ may or may not be available to the DM in $\myitem$ when they form their ranking. 
After observing $\inventory$, 
the DM generates ranking scores $\scoreof{\myitem}$ and produces the bijection $\scoredpos{\score} : \inventory \rightarrow \{1,\ldots,n\}$ so that $\scoreditempos{\score}{\myitem}$ is the position of $\myitem$ when $\inventory$ is in ranked order according to $\score$.

This sequence, stochastic item-level outcomes $\stochasticoutcomeof{\myitem}$ and strictly-decreasing position weights $\posweight$ determine the value of the DM's objective function for a given value of $\inventory$:
\begin{equation}
 \myvalue(\score\mid\inventory) = \sum_{\mypos=1}^{n}\posweight \expectation{\stochasticoutcomeof{\scoreditemposinv{\score}{\mypos}}|\inventory}.   
\end{equation}\label{eq:dm_objective}

After the DM ranks $\inventory$, the analyst observes data of the form
$
\observeddataset = \{( \scoreof{\myitem}, \outcomeof{\myitem}, \groupof{\myitem}): 
\myitem \in \observeditems \subseteq \inventory \},
$
where $\observeditems$ is an observed subset of the items $\inventory$ and $\outcomeof{\myitem}$ is the outcome assigned to $\myitem$. Within this observed data, we are especially interested in the subset of items with similar scores.
\begin{definition}
    Given a closeness threshold $\epsilon$, the \textbf{matched pair set about a group $g$} of the observed data $\mathcal D$ is given by:
    \begin{equation}
        MP_\epsilon(g,\mathcal D) = \{(\itemingroup,\itemnotingroup) : 0 \leq s(i_{\lnot g}) - s(i_g) \leq \epsilon, g(i_g) = g, g(i_{\lnot g}) \ne g, i_g \in \observeditems, i_{\lnot g} \in \observeditems \}.
    \end{equation}
\end{definition}
Given the set of pairs $\matchedpairs_{\epsilon}(\group,\observeddataset)$, we compare the average outcomes $\outcomeof{\itemingroup}$ for items in group $\group$ against outcomes $\outcomeof{\itemnotingroup}$ for those items not in group $\group$.  The average contrast between these outcomes, taken over all pairs where the items in $g$ have a slightly lower score, gives us our fairness metric.
\begin{definition}
The {\bf matched pair calibration metric} for group $\group$ for a single query is defined as
    $$
        MPC_{\epsilon}(\group,\observeddataset) = \frac{1}{|\matchedpairs_{\epsilon}(\group, \observeddataset)|}\sum_{(\itemingroup, \itemnotingroup) \in \matchedpairs_{\epsilon}(\group, \observeddataset)}\outcomeof{\itemingroup} - \outcomeof{\itemnotingroup}.
    $$
\end{definition}

Matched pair calibration (MPC) finds similarly scored items within the same query and compares their outcomes. Assuming that the ranker is appropriately valuing content across groups, we expect these pairs of items to have similar value on average. If we instead observe that a group is being systematically undervalued in these pairs---which would result in an MPC metric significantly greater than 0 for group $\group$---we would conclude that this scoring rule was systematically underestimating the value of items from $\group$. The conclusion would then be similar to other measures of fairness that focus on algorithmic performance: the positioning of items in $\group$ is not consistent with their value according to the ranker's objective, and therefore, the ranker ought to be placing these items higher.

To extend MPC from a single query to a collection, let set $\observeddatasets = \{\observeddataset\}$ contain the analyst observed data across queries. We write the matched pair set across queries as $MP_\epsilon(\group, \observeddatasets) = \cup_{\observeddataset \in \observeddatasets} MP_\epsilon(\group, \observeddataset)$.

\begin{definition}
The {\bf matched pair calibration metric} for group $\group$ across queries is defined as
    $$
        MPC_{\epsilon}(\group,\observeddatasets) = \frac{1}{|\matchedpairs_{\epsilon}(\group, \observeddatasets)|}\sum_{(\itemingroup, \itemnotingroup) \in \matchedpairs_{\epsilon}(\group, \observeddatasets)}\outcomeof{\itemingroup} - \outcomeof{\itemnotingroup}.
    $$
\end{definition}

\subsection{Connection to Marginal Outcome Tests}\label{sec:conn_marginal_outcome}

A marginal outcome test is a method for detecting discrimination in human behavior due to Becker \citep{Becker} (see \citep{HullMOT} for a contemporary presentation). The logic behind the test is that a DM who has no preference for rewarding a particular subgroup will make decisions that are optimal for their personal group-neutral objective, given their available information. Note that if the DM is unbiased, marginal individuals from each group---those that just barely qualify under a common decision threshold---will have the same expected value to the DM's goal. Otherwise, it would be possible to improve the outcome by admitting more members of the group with higher marginal value; to the extent that this is a systematic feature of the data, it would be inconsistent with a group-neutral objective. An observer can leverage this feature of neutral decision making to detect biased decision making, so long as she has the ability to identify which individuals are marginal. For binary classification, previous work \citep{SamMismeasure, HullMOT} has shown that examination of subgroup calibration near the classification threshold acts as a marginal outcome test for bias in the underlying model.

The key insight of this test is the ability to connect differences in marginal value across subgroups to discriminatory behavior by the DM; the test offers an empirical prediction that can then be used to infer unfair decisions. However, in the ranking setting, there is no fixed threshold at which decisions are made. Rather, items are compared to each other, which means that relative prediction errors drive fairness harms rather than absolute ones. This issue was recently noted by \citep{FlavienEtAl} and is closely related to the problem of confounding in causal inference problems. The marginal outcome test framework makes the implications of this confounding clear, and also suggests a potential solution, at least for similarly-oriented fairness questions.

Given a scoring rule $\score$, consider a modification that adds a small positive value to the scores assigned to a group $\group$. Formally, we consider an alternative scoring rule $\altscore$ with
$\altscoreof{\myitem} = \scoreof{\myitem} + \modification \mathbf{1}(\groupof{\myitem}=\group),
$
with $\modification$ the modification made to the scores assigned to $\group$. Let $\myvalue(\altscore)$ be the value of the ranking objective under this alternative scoring rule. The expected benefit from this modification is
\begin{equation}
    \Delta_{\modification}(\score|\inventory) \equiv \myvalue(\altscore|\inventory) - \myvalue(\score|\inventory) = \sum_{\mypos}\posweight \expectation{\stochasticoutcomeof{\scoreditemposinv{\altscore}{\mypos}} - \stochasticoutcomeof{\scoreditemposinv{\score}{\mypos}}|\inventory}.
\end{equation}
Note that the value function only differs between $\score$ and $\altscore$ in positions $\mypos$ where $\scoreditemposinv{\score}{\mypos} \neq \scoreditemposinv{\altscore}{\mypos}$ so that the modification $\modification$ changed the item in that position. 
Taking the expectation across the distribution of $\inventory$ and applying the law of iterated expectations, we have
\begin{equation} \label{eqn:confound}
    \expectation{\Delta_{\modification}(\score)} = \sum_{\mypos} \posweight \expectation{\expectation{\stochasticoutcomeof{\scoreditemposinv{\altscore}{\mypos}} - \stochasticoutcomeof{\scoreditemposinv{\score}{\mypos}}|\inventory}\mathbf{1}(\scoreditemposinv{\score}{\mypos} \neq \scoreditemposinv{\altscore}{\mypos})}.
\end{equation}
The covariance between ``marginality,'' captured by the set of positions for which the modification changes the item ranked there, and the change in the objective, captured by $\expectation{\stochasticoutcomeof{\scoreditemposinv{\altscore}{\mypos}} - \stochasticoutcomeof{\scoreditemposinv{\score}{\mypos}}|\inventory}$, means that group-level fairness statistics that only use the marginal distribution of scores may fail to capture important subtleties in the structure of the problem. Unless the probability of reordering is independent of the association between ranking score and outcome, there is no particular reason to  think that $\Delta_{\modification}(\score)$ will reflect the absolute misvaluing of a group.

A salient example of when this may mislead the analyst is when comparing calibration curves across groups. When $\stochasticoutcomeof{\myitem}$ is a binary outcome and $\scoreof{\myitem}$ has the interpretation of a binary prediction, it may seem natural to compare subgroup calibration curves $P(\stochasticoutcomeof{\myitem} = 1|\scoreof{\myitem}, \groupof{\myitem})$ across groups. However, this approach runs the risk of falsely asserting independence between marginality -- captured by the frequency with which items between groups come into competition -- and the average difference in group-level outcomes at a score. For example, in the setting considered by \citep{FlavienEtAl}, different item groups have different average outcomes across viewer groups. This correlation leads to misleading aggregate conclusions that diverge from viewer-level metrics. We simulate a similar setting and show how marginal calibration fails to detect a fairness problem in the model (Section~\ref{appendix:synthetic}).

The matching process we propose solves this issue by explicitly only including comparisons such that $\scoreditemposinv{\score}{\mypos} \neq \scoreditemposinv{\altscore}{\mypos}$, which both improves the sensitivity of the test by excluding instances in which the ranking is not perturbed and ensures that the result is not impacted by the confounding issue identified above. Informally, the connection between MPC and $\Delta_{\modification}(\score)$ arises from the fact that any changes to the ranked order as a result of applying the modification $\modification$ result in swaps between items with a score difference at most $\modification$; mathematically,
$0 \leq s(p_s^{-1}(j)) - s(p_{\tilde s_\alpha}^{-1}(j)) \leq \alpha$.
This follows from the fact that ranked order is preserved within group, and changes to cross-group ordering can only occur between items with scores less than $\modification$ apart. A natural intuition, then, is that the MPC metric may roughly approximate the expected return from the modified ranking score function $\altscore$.

Indeed, there is a close connection between improvements to the objective via modification of the score and the MPC metric. We formalize that connection now.

\begin{theorem}\label{thm:mpc_marginal_outcome}
    For $\epsilon > 0$, define
    \begin{equation}
        \expectation{MPC_\epsilon(g)} = \expectation{Y(i_g) - Y(i_{\lnot g}) \mid 0 \leq s(i_{\lnot g}) - s(i_g) \leq \epsilon, g(i_g) = g, g(i_{\lnot g}) \ne g}.
    \end{equation}
    We assume that for any random variable $Z$, if $\groupof{\itemingroup} = \group$ and  $\groupof{\itemnotingroup} \neq \group$ we have
    \begin{equation}
        \stochasticoutcomeof{\itemingroup}-\stochasticoutcomeof{\itemnotingroup} \perp Z \mid \scoreof{\itemingroup} - \scoreof{\itemnotingroup},
    \end{equation}
    i.e., that the joint distribution of outcomes depends only on the items through their score difference. It follows that
    \begin{equation}
        \expectation{MPC_\epsilon(g)} > 0 \Rightarrow \Delta_\alpha(\score) > 0 \quad \forall \; \epsilon < \alpha.
    \end{equation}
\end{theorem}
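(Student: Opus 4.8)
The plan is to prove the implication by decomposing the objective change $\Delta_{\modification}(\score)$ into contributions from exactly the marginal pairs that the MPC metric averages over, and then using the independence hypothesis to reduce both quantities to integrals of a common one‑dimensional function. First, working conditionally on a fixed inventory $\inventory$ (the query whose observed data is $\observeddataset$), I would realize $\altscore$ as the endpoint of a continuous boost $\scoreof{\myitem} + \beta \mathbf{1}(\groupof{\myitem}=\group)$ with $\beta$ sweeping from $0$ to $\modification$. Since the same boost is applied to every item of $\group$ and to none outside it, within‑group order is preserved, so the only events as $\beta$ grows are adjacent transpositions: a pair $(\itemingroup,\itemnotingroup)$ swaps precisely when $\beta$ reaches $\scoreof{\itemnotingroup}-\scoreof{\itemingroup}$, which (generically, breaking ties by a fixed convention) requires $0 \le \scoreof{\itemnotingroup}-\scoreof{\itemingroup} \le \modification$ — exactly the membership condition of $\matchedpairs_{\modification}$ that is already noted before the statement. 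At the instant of that swap the two items occupy adjacent positions $\pos,\pos+1$, so the induced change in $\myvalue$ is $(\myweight_{\pos}-\myweight_{\pos+1})\,\expectation{\stochasticoutcomeof{\itemingroup}-\stochasticoutcomeof{\itemnotingroup}\mid\inventory}$ with $\myweight_{\pos}-\myweight_{\pos+1}>0$ by strict monotonicity of the position weights. Summing over swaps yields
\begin{equation}
\Delta_{\modification}(\score\mid\inventory) = \sum_{(\itemingroup,\itemnotingroup)\in \matchedpairs_{\modification}(\group,\observeddataset)} c_{(\itemingroup,\itemnotingroup)}\,\expectation{\stochasticoutcomeof{\itemingroup}-\stochasticoutcomeof{\itemnotingroup}\mid\inventory},
\end{equation}
where each coefficient $c_{(\itemingroup,\itemnotingroup)}>0$ is the position‑weight gap at the swap.

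Next I would take the expectation over $\inventory$ and invoke the conditional‑independence assumption. Because $\stochasticoutcomeof{\itemingroup}-\stochasticoutcomeof{\itemnotingroup}$ is, given the score gap $\scoreof{\itemnotingroup}-\scoreof{\itemingroup}$, independent of every other feature of the configuration — in particular of the positive coefficient $c_{(\itemingroup,\itemnotingroup)}$ — the outcome term is summarized by the single function $f(\delta) := \expectation{\stochasticoutcomeof{\itemingroup}-\stochasticoutcomeof{\itemnotingroup}\mid \scoreof{\itemnotingroup}-\scoreof{\itemingroup}=\delta}$. This decouples outcomes from the position‑dependent weights and lets me write $\expectation{\Delta_{\modification}(\score)}$ as a nonnegatively weighted integral of $f$ over gaps $\delta\in[0,\modification]$, while the same independence exhibits $\expectation{MPC_\epsilon(\group)}$ as a (differently) nonnegatively weighted average of $f$ over $\delta\in[0,\epsilon]$. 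The sign of both quantities is thus governed by the same function $f$, which is the essence of the asserted connection.

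The step I expect to require the most care is the final sign‑propagation: reconciling the averaged, single‑threshold positivity of $\expectation{MPC_\epsilon(\group)}$ on $[0,\epsilon]$ with positivity of the position‑weighted $\expectation{\Delta_{\modification}(\score)}$ on the wider band $[0,\modification]$. The obstacle is precisely that the coefficients $c_{(\itemingroup,\itemnotingroup)}$ depend on the swap position and not on the gap alone, so one cannot simply pull the positive MPC average outside the $\Delta_{\modification}$ sum; the conditional‑independence assumption is the tool that removes this entanglement by collapsing outcomes onto $f$. I would then argue that the hypothesis controls the sign of $f$ across the marginal band that the $\modification$‑boost actually rearranges — reading $\expectation{MPC_\epsilon(\group)}>0$ as certifying that group $\group$'s gap‑conditional outcome advantage is nonnegative there, with $\epsilon<\modification$ ensuring the swept band is covered. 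Once $f\ge 0$ on $[0,\modification]$ with strict positivity on a set of positive measure, the nonnegative‑weight representation forces $\expectation{\Delta_{\modification}(\score)}>0$; making the averaged, single‑threshold premise rigorously suffice for this pointwise sign conclusion is where the real work lies.
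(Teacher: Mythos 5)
Your first two steps match the paper's proof in substance: the paper also decomposes the passage from the ranking under $s$ to the ranking under $\tilde s_\alpha$ into a sequence of transpositions, one per misranked pair $(i_g, i_{\lnot g})$ with $0 \le s(i_{\lnot g}) - s(i_g) \le \alpha$ (it processes the group-$g$ items from highest ranked downward and bubbles each past the block it is misranked with, rather than sweeping a continuous boost $\beta$, but the resulting set of swaps and positive weight coefficients is the same), and it likewise invokes the conditional-independence assumption to decouple the weight coefficients from the expected outcome differences. Up to that point your argument and the paper's coincide.

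The genuine gap is the final sign-propagation step, which you explicitly defer (``where the real work lies'') and never carry out --- and, in the form you propose, it cannot be carried out. You plan to deduce that $f(\delta) := \E[Y(i_g) - Y(i_{\lnot g}) \mid s(i_{\lnot g}) - s(i_g) = \delta]$ is nonnegative on $[0,\alpha]$ from the hypothesis that the band averages $\E[MPC_\epsilon(g)]$ are positive for all $\epsilon < \alpha$. That inference is false: positivity of every band average only constrains the running integral $\int_0^\epsilon f(\delta)\rho(\delta)\,d\delta$ to stay positive, and $f$ may still be negative near $\alpha$ (for instance $f = 1$ on $[0,\alpha/2]$ and $f = -0.9$ on $(\alpha/2,\alpha]$ with a uniform gap density keeps every band average at least $0.05$). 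Since the independence assumption restricts only the conditional law of the outcome difference given the gap, nothing prevents the positive coefficients in your weighted-integral representation of $\E[\Delta_\alpha(s)]$ from concentrating on the region where $f < 0$, in which case your representation yields a negative total. The paper closes the argument differently: in its final equality it writes the expected outcome difference of the $k$-th swapped pair, whose gap is $\delta_k$, directly as the band-averaged quantity $\E[MPC_{\delta_k}(g)]$ (justified by its independence assumption, under which conditioning on the exact gap and on a band containing it are treated as interchangeable), so that every term in the swap sum is a positive weight times a quantity that the hypothesis makes positive at $\epsilon = \delta_k$. Whatever one thinks of that identification --- your distinction between exact-gap and band conditioning is precisely what it elides --- it is the step that completes the paper's proof, and your proposal contains no workable substitute for it.
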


\begin{proof}
We will show that (1) The difference between the rankings defined by $\altscore$ and $\score$ can be expressed as a sequence of swaps between items that are misranked post-modification, and (2) The expected value of each swap is positive when $\expectation{MPC_\epsilon(g)} > 0 \  \forall \; \epsilon < \alpha$, which together imply that the expected value of the full ranking change is positive.

Let $K$ count the number of mis-ranked pairs when we rank the items scored by $\altscore$ using $\score$ and $\ell(t)$ the number of misranked pairs containing the $t$-th highest ranked item in group $\group$ which we denote $\itemingroup[t]$. 
Because each pair contains exactly one item of group $\group$, $\sum_{t=1}^{n_\group} \ell(t) = K$ where $n_\group$ is the total number of items in group $\group$. 
We now show how to construct a sequence of rankings $r_0, \ldots,r_K$ with $r_0$ and $r_K$ denoting the ranking induced by $\score$ and $\altscore$ respectively through a series of $K$ swaps so $r_k$ is the ranking after the $k$-th swap. The swaps are partitioned so that for $k(t) = \sum_{t'=1}^{t} \ell(t')$, the $t$ highest ranking items in group $\group$ are correctly ranked in $r_{k(t)}$.  

In step $t=1$, we swap $\itemingroup[1]$ with the $\ell(1)$ items it is misranked with so $\scoreof{\itemingroup[1]} < \scoreof{\itemnotingroup} = \altscoreof{\itemnotingroup} < \altscoreof{\itemingroup[1]} = \scoreof{\itemingroup[1]} + \modification$ in score order of the items it is mis-ranked with so the ordering on the items we swap $\itemingroup[1]$ with is preserved. This step performs exactly $\ell(1) = k(1)$ swaps to produce ranking $r_{k(1)}$ in which $\itemingroup[1]$ is correctly ranked and created no additional mis-ranked pairs because no items $\itemnotingroup$ not in $\group$ were moved behind items $\itemingroup$ creating additional mis-rankings. At step $t$, there are exactly $\ell(t)$ misranked pairs containing $\itemingroup[t]$ in $r_{k(t-1)}$ because no additional mis-ranked pairs were introduced in previous $t-1$ steps. Following the same procedure, we can correct all of these pairs with $\ell(t)$ swaps to produce ranking $r_{k(t)}$ and by induction, at step $r_K$ all items are correctly ranked according to $\altscore$. 

We now show that the expected value of the $k$-th swap is positive. Let $\itemingroup[k]$ and $\itemnotingroup[k]$ be the items in group $\group$ and not in group $\group$ that are swapped. For the $k$-th swap, we define $w_{k}^{+} = w_{r_{k-1}^{-1}(\itemnotingroup[k])}$ and  $w_{k}^{-} = w_{r_{k-1}^{-1}(\itemingroup[k])}$ noting that $w_{k}^{+} > w_{k}^{-}$ by the fact that weights are strictly decreasing and in any misranked pair, the item not in group $\group$ is placed higher in $r_{k-1}$. With a slight abuse of notation, let $V(r_{k}) = 
\expectation{\sum_{j}w_{j}\stochasticoutcomeof{r_{k}^{-1}(j)}}$.
\begin{eqnarray*}
    \Delta_{\alpha}(s) &=& V(r_{K}) - V(r_{0}) \\
    &=& \sum_{k=0}^{K-1}V(r_{k+1}) - V(r_{k}) \\
    &=& \sum_{k=1}^{K}(w_k^+ - w_k^-)\expectation{Y(i_g[k]) - Y(i_{\lnot g}[k]) \mid 0 \le s(i_{\lnot g}[k]) - s(i_g[k]) \leq \alpha} \\
    &=& \sum_{k=1}^{K}(w_{k}^{+} - w_{k}^{-})\expectation{MPC_{s({i_{\lnot g}[k]) - s(i_g[k]})}(g)}.
\end{eqnarray*}
The third and fourth equalities follow from the assumption that the expected outcome difference is independent of other factors besides the score difference. By the assumption that weights are increasing, if $\expectation{MPC_\epsilon(g)} > 0$ holds for all $\epsilon < \alpha$ then each term in the sum is positive and we have $\Delta_{\alpha}(\score) > 0$.
\end{proof}
We make two critical assumptions in the theorem. First, we require that MPC be uniformly positive for all $\epsilon$ in a neighborhood of zero; this may seem restrictive, but in practice, this holds in many applications. For example, when the conditional expectation functions $\expectation{\stochasticoutcomeof{\myitem}|\scoreof{\myitem}, \groupof{\myitem}}$ are smooth and diverge across $\group$, the MPC metric will often exhibit the uniformity property we assert. Second, we impose that the difference between outcomes depends only on the difference between the item scores. This requirement amounts to assuming that the expected outcome difference between items from $\group$ and not $\group$ is an affine function of their score differences; this does place meaningful restrictions on the data generating processes we consider. However, in return, we are guaranteed that the positive expected improvement from $\modification$ is uniform across query properties -- that is, in the ranker's information $\inventory$. Moreover, many important theoretical examples, such as constant calibration differences across item groups within viewer categories (but with correlations across viewers as in \citep{FlavienEtAl}) would satisfy this requirement.

\section{Connections to Existing Fairness Approaches}

\subsection{Qualitative Interpretation of Marginal Outcome Tests}
\label{sec:qualitative_interp}

As we discussed above, the marginal outcome test \citep{Becker} takes advantage of the fact that a DM with discriminatory preferences will leave empirical evidence of their bias in the form of unequal marginal value for members of different groups. 
For ranking problems, the analyst faces at least two competing challenges when generalizing a marginal outcome test from the classification setting: (1) If the score has an absolute meaning across queries, is it still appropriate to use calibration-like ideas, such as comparing average outcomes for similar ranking scores?  And, (2) if the score does not have an absolute meaning, is it possible to identify marginal decisions, or will some comparisons necessarily be inframarginal?

In the first case where scores do have absolute meaning across queries, we investigate if it is still appropriate to use calibration-like ideas, such as some measure of ranking score "calibration" where the conditional expectation function $\expectation{\stochasticoutcomeof{\myitem} \mid \groupof{\myitem}, \scoreof{\myitem}}$ is compared across queries. This measure will generally suffer from the confounding problems identified in Equation~\ref{eqn:confound} when $\stochasticoutcomeof{\myitem}$ is correlated with query-level properties $\inventory$. In that case, we will need to determine whether it is more important that content is absolutely misvalued relative to its score or whether it is relatively misvalued in the context of the query. In most ranking problems, we argue that the latter is the more salient item-side fairness concern. 
As with previous work such as \citep{SinghJoachims}, we consider the primary fairness risk the potential loss of exposure for items due to errors by the ranker, and in many settings, the absolute error of the ranker may not imply that a group is being improperly (or properly) valued. This means that absolute errors no longer share the same qualitative intuition that we argued was critical to the use of the marginal outcome test as a fairness measure. Despite its formal similarity in ranking and calibration problems, it is only when scores are compared to a fixed standard (or used continuously, but not relative to each other) that this approach to algorithmic fairness shares the same rationale. Moreover, as we demonstrated above, there is a close correspondence between MPC and improvements to the DM's stated objective via group-level modifications of the score. This motivates the choice of MPC as a ranking fairness measure over metrics that compare absolute errors.

In the second case where the scores lack absolute meanings across queries, we investigate if it is possible to identify marginal decisions or if some comparisons are necessarily inframarginal. Because matched pairs calibration only compares items that have similar scores for the same query, it avoids the difficulty of imposing an absolute meaning on the score that can be carried across multiple queries. Rather, our approach compares the relative value of items when they are closely situated in the ranking returned for one query. However, our strategy does rely on a rank-ordering of marginality through the use of a common threshold $\epsilon$ for matching. 
Put differently, the common threshold approach requires that, if $|\scoreof{\myitem} - \scoreof{\myitem'}| < |\scoreof{\ell} - \scoreof{\ell'}|$ for some $\myitem, \myitem', \ell, \ell'$, then the DM would prefer that we permute the ordering of $\myitem$ and $\myitem'$ rather than the ordering of $\ell$ and $\ell'$, even if these items appeared in different queries. 
MPC is not a purely ordinal notion of fairness, then, and in the absence of this assumption, the inequality constraint in \citep{RothEtAl} is the most an analyst may be able to determine, as she would be unable to assess which decisions are ``close calls" for the DM. 
However, so long as score differences are correlated with marginality, MPC can still offer an improvement on the inequality constraint, and in the limit, we only require that the ranker is indifferent between items with exactly the same score, which is a much more mild assumption that is likely to hold.

\subsection{Comparison with Other Pairwise Ranking Error Measures}
\subsubsection{Inequality Conditions: Roth, Saint-Jacques and Yu}

The MPC metric is closely connected to the outcome test proposed in \citep{RothEtAl}. 
In that paper, the authors note that an unbiased ranker would be indifferent between permutations of marginal candidates, implying a testable equality condition. The MPC metric is precisely the formalization of this equality condition, meaning that the concept of fairness tested by our metric is the same as the one in their study. 
However, rather than define a notion of marginality that can be applied to their application, \citep{RothEtAl} instead study an inequality condition that comes about due to the inability to identify which permutations are most similar from the perspective of the ranker. This inequality is sharp, meaning that any ranker that satisfies their condition could pass a marginal outcome test for some distribution of the data. To the extent that a ranking problem possesses the type of structure that can benefit from applying MPC, our proposed method will be a more sensitive marginal outcome test for systematic misvaluing of items from a group.


Thus from the analyst's perspective, the two tests offer different coverage of potential fairness concerns: a more sensitive equality condition (for comparing the outcomes of items with marginal differences) and a more comprehensive inequality condition (for comparing outcomes on items beyond those with marginal differences) and there is value in applying both approaches. 

\subsubsection{Other Fairness Definitions: Pairwise Ranking Errors in Beutel et al.}

\citep{BeutelEtAl} provide a pairwise ranking measurement that imposes an equality condition without requiring that comparisons occur between marginal items. This approach is closely related to the notion of Equality of Opportunity \citep{HardtEOO} and can be seen as the ranking generalization of that approach, similar to how MPC generalizes calibration to a ranking context. In this approach, even errors that occur between distantly-scoring items are considered relevant to the fairness problem, and fairness requires having similar rates of misranking occur when a group is the more highly ranked member of a pair as when it is the less highly ranked item. 


The trade-offs between MPC (calibration-like) and \citep{BeutelEtAl} (error rate-like) in the ranking setting is thus an extension of the long-discussed trade-offs between subgroup calibration and error rates in a binary classification setting \citep{ManishNIPS}. Critics of the calibration approach suggest that it is easy to generate examples of badly unfair decisions that satisfy calibration requirements, while others point out that equalized error rates may require consciously making subpar decisions for a group in order to preserve a notion of equality that does not guarantee equal treatment of similar items. We do not propose to resolve these debates in this paper. We simply note the connections between these strategies and their corresponding tests in binary classifiers to help analysts make reasoned decisions about which techniques are appropriate for their application.

\section{Implementation of MPC}

\subsection{Estimation of MPC}

The MPC metric is aligned with the optimal group-level score modification when comparisons are made between items with similar ranking scores. In the extreme case of exactly tied scores, MPC captures the difference in value to the DM's objective from alternative tie-breaking strategies: one in which all ties are broken in favor of group $\group$, and another in which all ties are broken against $\group$. In practice, there are rarely sufficient observations with exact ties between scores to allow precise statistical inference using only these pairs; in these cases, the analyst will need to use information from some inframarginal pairs in order to estimate the MPC metric. In principle, this residual score difference could lead to mistaken fairness conclusions if it masks the bias in closer-to-marginal pairs. To the extent possible, then, these differences should be made as small as possible so that any conclusions about the fairness of the ranking scores are sound.

There are two leading options for approaching the MPC estimation problem. First, set some threshold parameter $\epsilon$ and use all pairs with score differences below the threshold. Second, estimate a function of $\epsilon$ and then evaluate the difference at zero.
These options closely resemble the problem of estimating a regression discontinuity (RD) effect in many social science applications of causal inference \citep{TitiunikAnnRev}. The literature on RD designs has proposed a number of methods for optimally estimating this quantity \citep{TitiunikRobustRD}. For simplicity, we will focus on the first, where the analyst selects a finite $\epsilon$ that trades off the inclusion of inframarginal decisions with statistical precision. Most of the considerations here also apply to the function estimation approach, and so we illustrate the intuition in this simpler context.

\subsubsection{Selecting a Threshold}\label{sec:threshold}

The most important MPC implementation decision is how to determine the matching threshold $\epsilon$. As with any matching approach, this tolerance parameter will control a type of bias-variance statistical trade-off, with larger values of $\epsilon$ including pairwise comparisons where the ranker is further from indifference but having a greater number of examples for more precise statistical inference \citep{imbens_rubin_2015}. 
We recommend ranking pairs by their score differences and using the $K$ pairs with the smallest difference as the matched pair set. Value $K$ should be selected to ensure that there is adequate power to reject the null hypothesis of fairness, which occurs when $MPC_{\epsilon}(\group,\observeddataset) = 0$. While the value of $K$ can be adjusted according to the relationship between the score difference and the outcome, generally $K$ should be the smallest value to ensure good statistical properties to the test.

\subsubsection{When to Weight by Position}

In the theoretical analysis above, we considered an unweighted version of MPC that averages over the empirical distribution of the data. Assuming that the distribution of the data follows the decline in attention from the viewer over ranks, and further assuming that the position weights in the objective are proportional to the probability of attention, the simple average over pairs should intrinsically reflect the position importance weights, removing the need to further weight the measure. However, if the weights do not track viewer attention in this way, then it may be necessary to compute the MPC metric by position and re-weight to better reflect the DM's objective. We can achieve this with a relatively simple modification to the matching procedure and metric. 


\subsubsection{Constrain Matched Pair Set to Adjacent Pairs}
The decline in viewer attention over ranks makes comparing   $\outcomeof{\myitem}$ and $\outcomeof{\myitem'}$ and positions $\mypos=\scoreditempos{\score}{\myitem}$ and $\mypos'=\scoreditempos{\score}{\myitem'}$ difficult if $\mypos$ and $\mypos'$ are far apart because differences in outcomes could be due to position bias alone. The score matching threshold helps control for large gaps in positions, but for further control, one can filter the matched pair set to include only those pairs which appear in adjacent positions. This has the further virtue of imposing additional forms of ``closeness'' on the comparisons between items, which further increases the likelihood that the ranker would be indifferent to reordering the pair. Furthermore, to increase sample sizes, an analyst may consider including both cases where the group of interest is ahead in the ranking as well as behind. While the proofs in previous section do not extend cleanly to this set, the adjacency constraint has the advantage of comparing essential ties and an outcome discrepancy among these near ties still provides evidence of unfairness. An adjacency constraint may also be beneficial in settings where additional complexity in ranking induces a non-monotonic relationship between scores and positions. 
An example of this implementation is included in Appendix~\ref{appendix:adj_pairs}.

\subsubsection{Inference While Accounting for Overlapping Pairs}

For finite $\epsilon$, there will often be cases where the same item is shared across two or more pairs. For example, for $\epsilon = 0.05$ and items $\myitem, \myitem'$ and $\myitem''$ with scores $0.75, 0.74$ and $0.73$, if the lowest of these is associated with $\group$, then both $(\myitem, \myitem'')$ and $(\myitem', \myitem'')$ are valid pairs. In such problems, the pairwise differences will no longer be independent across pairs, and a naive variance estimator that fails to account for this correlation will produce confidence intervals that have lower coverage than intended.
We suggest dealing with this issue by estimating the variance of the MPC metric via an appropriate technique for handling these correlations, such as the cluster-robust variance proposed in \citep{aronow_samii_assenova_2017}. These techniques should improve the coverage of interval estimators and allow for closer-to-nominal type I and II error rates when conducting hypothesis tests, though we leave a formal study of optimal statistical inference in this setting for future work.



\section{Example Application of Matched Pair Calibration}

We provide an implementation of MPC on the MovieLens dataset \citep{harper2015movielens} to illustrate that MPC is able to detect group-level bias, including cases where ranking models are calibrated by group.
The MovieLens dataset consists of timestamped, five-star movie ratings\footnote{Only multiples of $\nicefrac{1}{2}$ star ratings were allowed with lowest and highest possible scores of $\nicefrac{1}{2}$ and $5$ stars.} and movie metadata including a list of genres, which we treat as group identifiers.
We partition the data by timestamp, using the earliest 80\% to train a simple score-based ranking model, and then rank the final 20\% for each user as our evaluation.
We show:
\begin{enumerate}
\item Boosting (demoting) the scores of any genre lowers (increases) the MPC estimate compared to the non-boosted score, demonstrating that MPC captures straightforward notions of group-level bias.
\item Calibrating the scores per genre does not unilaterally remove the MPC gap.
\end{enumerate}

We use a Singular Value Decomposition (SVD) of the user-movie rating matrix as our ranking model \citep{hu2008collaborative}, filling in missing values with the mean rating from the training set, and using the top 64 principle vectors as our low-rank representation.
We score and rank the evaluation set using the inner-product of our low-rank representations, removing ratings where either the user or movie didn't appear in the training set. A naive SVD ranker would assign the same score to every unseen pair, resulting in many MPC matches due to the transductive nature of the simple ranker, but not the fairness criteria we want to evaluate.

We finally construct matched pairs by treating each genre as a binary group, and selecting a threshold $\epsilon$ using the first-percentile score difference as described in Section~\ref{sec:threshold}.
For space concerns, we report only the results for the \emph{documentary} genre in Table~\ref{table:results}. For full results, please refer to Appendix~\ref{sec:movielens}.    
As expected, the MPC gaps across movie genres vary. For some, the MPC gaps are positive indicating that the genre is under-valued and for others the MPC gaps are negative indicating that the genre is over-valued. 

First, we investigate if MPC can detect bias in the form of a modified scoring rule that additively boosts or demotes a group, introduced in Section~\ref{sec:conn_marginal_outcome}.
For all genres, we compute the MPC metric for the initial scoring rule, a group boosted scoring rule, and a group demoted scoring rule.
We would expect the MPC metric to follow $MPC_\mathrm{boosted} < MPC_\mathrm{baseline} < MPC_\mathrm{demoted}$. Recall that large positive values indicate more unfairness (more under-ranked) for the group; the more a group is boosted, the less undervalued it should be. Indeed, this is what we observe for all genres.

Next, we investigate the connection between MPC and the marginal outcome test formalized in Theorem~\ref{thm:mpc_marginal_outcome}.
The value function we use is the Normalized Discounted Cumulative Gain (NDCG).
For every genre where we see a statistically significant drop in NDCG for boosting (demoting) a genre, it always corresponds to a negative (positive) baseline MPC, indicating that we are exacerbating existing bias and further hurting ranking quality
(see results for the \emph{documentary} genre in Table~\ref{table:results}).
Note that NDCG values are high at baseline (above 0.95), and no treatment resulted in a statistically significant improvement over baseline NDCG.
Additionally, the large number of movie genres may make it difficult to move NDCG in the aggregate with an intervention on any single genre.

Finally, we investigate if MPC gaps persist once we calibrate the scores to understand the value of using MPC to detect bias in the setting where scores are calibrated. We produce calibrated scores for each genre fitting an isotonic regression from the scores to the rating in the evaluation set.
While this is an unrealistic and overfit calibration, it allows us to evaluate MPC bias against the best possible calibration on a set of scores.
Indeed, we see that while the MPC gap decreases for the \emph{documentary} genre (from 0.278 to 0.066) once we calibrate the scores, it does not remove all evidence of bias (Table~\ref{table:results}). A similar result holds across the majority of genres, the MPC estimate shrinks towards zero but does not disappear entirely (Appendix \ref{appendix:mpc_calibration}). 

\begin{table}
\begin{center}
\begin{tabular}{lcc}
\toprule
Scoring Rule &                MPC &                 NDCG \\
\midrule
Boosted     &  $0.006 \pm 0.022$ &  $0.9592 \pm 0.0008$ \\
Baseline   &  $0.278 \pm 0.010$ &  $0.9590 \pm 0.0009$ \\
Demoted    &  $0.507 \pm 0.006$ &  $0.9587 \pm 0.0009$ \\
Calibrated &  $0.066 \pm 0.016$ &  $0.9595 \pm 0.0008$ \\
\bottomrule
\end{tabular}
\end{center}
\caption{For space constraints, we show only the results for the \emph{documentary} movie genre but the MPC decrease/increase was consistent across each demotion/boost applied to each genre.
The magnitude of each demotion/boost was $\nicefrac{1}{3}$ the standard deviation across all scored in the evaluation set.
To compute the confidence intervals, we used bootstrapping with 201 trials and a 95\% confidence interval.
We also saw the NDCGs decrease when boosting (demoting) scores for genres with negative (positive) MPC metrics across genres but the wide majority of changes were not statistically significant.
Finally, while the MPC gaps did not persist across all genres after calibrating scores, for the documentary genre, it does illustrating the power of MPC to detect bias in the setting where scores are calibrated.}
\label{table:results}
\end{table}

\section{Conclusion}
We propose an approach to identifying marginal ranking decisions that allows us to tighten the inequality condition of \citep{RothEtAl} to an equality condition. This permits a more sensitive test of unfairness when the analyst has access to the underlying model scores that focuses on relative errors between items to better identify when group-level exposures are not justified by the ranking task. While our test requires additional information on top of what is required for the inequality condition, we argue that access to the score is often a fairly modest requirement in practice, which should make our approach broadly applicable.

The core substantive claim was that marginality is identified by similarity in ranking scores. In many applications, this is a reasonable assumption, but it also raises the question of whether other definitions of "closeness" might be appropriate in other cases. More generally, the basic logic of Becker's test is that taste-based discrimination is costly for the DM - she pays for her biases through inefficient decisions that can be improved via simple rules based on limited information. While we can consider the approach of \citep{RothEtAl} to be a means of detecting bias in the presence of inframarginality, we can alternatively frame their approach as an outcome test where marginality is defined according to a different simple rule (group-level treatments by position) than the score-based marginality that we consider. Through this lens, there is no single outcome test applicable to a given decision problem, but rather a collection of tests defined by the marginal set they consider. The question then becomes how to correctly define this marginal set to best capture the underlying fairness concept of interest. We hope to explore this topic in future work.

\bibliographystyle{ACM-Reference-Format}
\bibliography{mpc_bib}

\newpage
\appendix
\appendixpage


\section{Synthetic Example}\label{appendix:synthetic}

We extend the discussion in Section~\ref{sec:qualitative_interp} with a synthetic example that demonstrates the salience of MPC as a marginal outcome test to detect ranking bias in contrast to global prediction calibration.
Our example ranker is a binary classification ranker which predicts if the ranked item will be ``relevant'', a catchall binary outcome.
For simplicity the information contained in item $i$ is a scalar valued signal.
Each query has a \emph{type}, where the expected relevance is dependent on both the query type and the item type.
By conditioning the distribution of item types on the query type, it is possible to create a setting where the ranker is both systematically biased against a group, but is calibrated in expectation over queries.
While this situation may seem uncommon, we note that such a setting could arise when input features are biased or the item distribution is non-uniform, and the ranker is optimized for calibration.

We formalize this model as follows: Each query $q$ has a type drawn uniformly i.i.d. from two query types $\{u, v\}$ and both query types have inventories of size $n$ items to be ranked.
Each item $i$ has an i.i.d. type $\in \{1, 2\}$ conditioned on the query type, and an uniform i.i.d. signal $s \in [0, 1]$, that the ranker forwards as its score.
Based on the signal, query type, and item type, the item has an expected relevance

\begin{equation}
    \E[Y] = \left\{ 
  \begin{array}{ c l }
    b_{qi} s & \quad s < \nicefrac{1}{2} \\
    1 - (2 - b_{qi}) (1 - s) & \quad \textrm{otherwise}
  \end{array}
\right.,
\end{equation}

where $b_{qi}$, a multiplicative bias by type, is a function of the query and item type. Example relevance functions are shown in Figure~\ref{fig:true_cal}.

If the multiplicative biases satisfy $0 < b_{v2} < b_{v1} < 1 < b_{u2} < b_{u1} < 2$ the ranker is biased against items with type $1$ since they have a higher expected relevance for identical scores and there exists distributions over query and item types such that the ranker is calibrated across item types, $\E_q[Y \mid 1] = \E_q[Y \mid 2] = s$.

We simulated this system\footnote{Parameters: $p(q = u) = 0.5$, $b_{u1} = 1.5$, $b_{u2} = 1.1$, $b_{v1} = 0.9$, $b_{v2} = 0.7$, $n = 10$}, and present the results in Figure~\ref{fig:calibration}. When the ranker uses the signal for ranking, its estimation of relevance is calibrated across groups (Figure~\ref{fig:item_cal}) despite the present bias. However, MPC correctly shows a significant discrepancy in the marginal treatment of items with type $1$. When we boost the scores of items with type $1$ to mitigate the biased treatment, the MPC error drops to zero and the ranker produces a larger normalized discounted cumulative gain (NDCG). This boost removes calibration of binary labels, which is necessary for groups to have similar marginal value in this example. Full replication materials, with detailed results are included in the SI.

\begin{figure}[ht]
    \centering
    \begin{subfigure}[t]{0.45\textwidth}
        \centering
        \includegraphics[width=\textwidth]{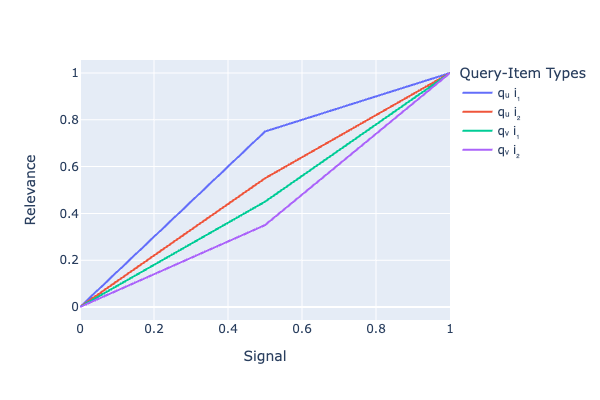}
        \caption{True calibration curves for query and item types.}
        \label{fig:true_cal}
    \end{subfigure}
    \hfill
    \begin{subfigure}[t]{0.45\textwidth}
        \centering
        \includegraphics[width=\textwidth]{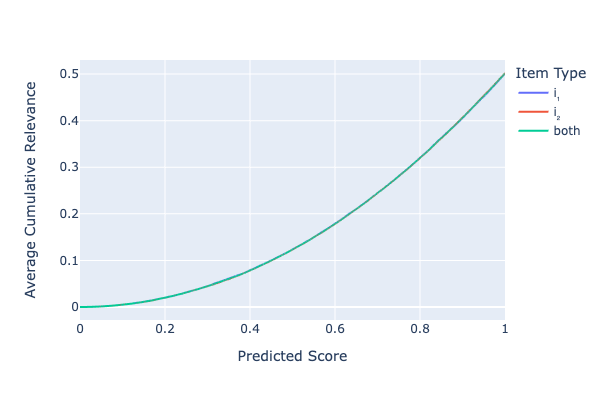}
        \caption{Observed cumulative relevance is calibrated by item type.}
        \label{fig:item_cal}
    \end{subfigure}
    \vskip\baselineskip
    \begin{subfigure}[t]{0.45\textwidth}
        \centering
        \includegraphics[width=\textwidth]{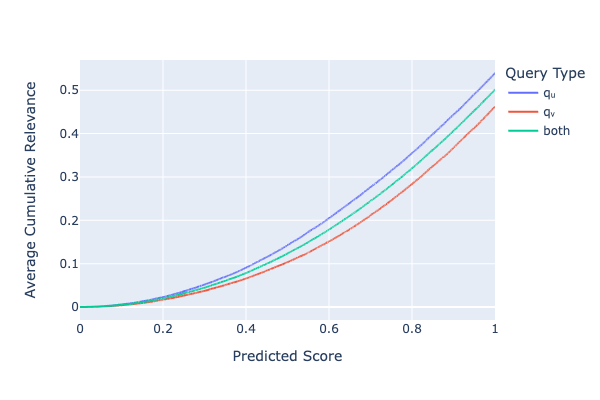}
        \caption{Calibrating across the confounding (unobserved) query type highlights the true miscalibration.}
        \label{fig:query_cal}
    \end{subfigure}
    \caption{Calibration results for synthetic example: The ranking system is calibrated with respect to observable signals~(\ref{fig:item_cal}) despite bias against type 1 items. MPC error detects this discrepancy. Conditioning on the confounding information also reveals the miscalibration~(\ref{fig:query_cal}).}
    \label{fig:calibration}
\end{figure}

\section{MovieLens Example}
\label{sec:movielens}


We provide an implementation of MPC on the MovieLens dataset \citep{harper2015movielens} to illustrate that MPC is able to detect group-level bias, including cases where ranking models are calibrated by group.
The MovieLens dataset consists of timestamped, five-star movie ratings\footnote{Only multiples of $\nicefrac{1}{2}$ star ratings were allowed with lowest and highest possible scores of $\nicefrac{1}{2}$ and $5$ stars.} and movie metadata including a list of genres, which we treat as group identifiers.
We partition the data by timestamp, using the earliest 80\% to train a simple score-based ranking model, and then rank the final 20\% for each user as our evaluation.
We show:
\begin{enumerate}
\item Boosting (demoting) the scores of any genre lowers (increases) the MPC estimate compared to the non-boosted score, demonstrating that MPC captures straightforward notions of group-level bias.
\item Normalized discounted cumulative gain (NDCG) (one decision maker objective value with the properties required in Equation~\ref{eq:dm_objective}) is consistent with respect to observed MPC errors.
\item Calibrating the scores per genre does not unilaterally remove the MPC error.
\end{enumerate}

We use a singular value decomposition (SVD) of the user-movie rating matrix as our ranking model \citep{hu2008collaborative}, filling in missing values with the mean rating from the training set, and using the top 64 principle vectors as our low-rank representation.
We score and rank the evaluation set using the inner-product of our low-rank representations, removing ratings where either the user or movie didn't appear in the training set. We remove unseen examples because a naive SVD ranker would assign the same score to every unseen pair, resulting in many MPC matches due to the transductive nature of the simple ranker, which is not the fairness criteria we want to evaluate.

We finally construct matched pairs by treating each genre as a binary group, and selecting a threshold $\epsilon$ using the first-percentile score difference as described in Section~\ref{sec:threshold}.
All of these analyses were generated on a 16-inch, 2021 MacBook Pro.

\subsection{Genre MPC gaps grow and shrink as expected when artificially adding bias}
\label{appendix:mpc_bias}

First, we investigate if MPC can detect bias in the form of a modified scoring rule that additively boosts or demotes a group, introduced in Section~\ref{sec:conn_marginal_outcome}.
For all genres, we compute the MPC metric for the initial scoring rule, a group boosted scoring rule, and a group demoted scoring rule.
We would expect the MPC metric to follow $MPC_\mathrm{boosted} < MPC_\mathrm{baseline} < MPC_\mathrm{demoted}$. Recall that large positive values indicate more unfairness (more under-ranked) for the group; the more a group is boosted, the less undervalued it should be. Indeed, this is what we observe for all genres (Figure~\ref{fig:mpc_boosting}). 

The magnitude of each demotion/boost was $\nicefrac{1}{3}$ the standard deviation across all $s$ in the evaluation data.
In Figure~\ref{fig:mpc_boosting} we plot three MPCs with confidence intervals for each genre corresponding to three different scoring rules: \begin{inparaenum}[(1)]
\item \emph{baseline}, which refers to the scores $s$ fit using SVD shown in red,
\item \emph{boosted}, which refers to scores $s$ with $\alpha$ added to each movie of genre $g$ shown in blue, and
\item \emph{demoted}, which refers to scores $s$ with $\alpha$ removed from each movie of genre $g$ shown in green.
\end{inparaenum}
For all genres, the genre MPC errors are largest when the genre is demoted and smallest when genre is boosted. This demonstrates how we can use MPC to detect a simple form of ranking bias. 

\begin{figure}
    \centering
    \includegraphics[width=0.8\textwidth]{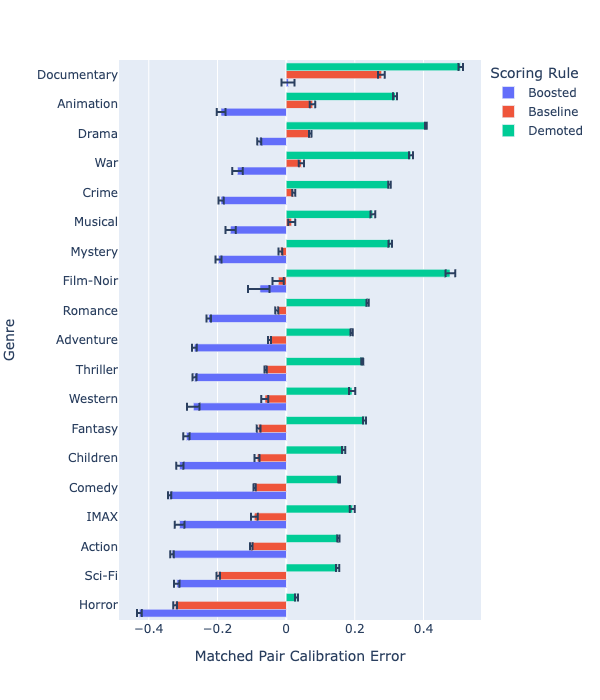}
    \caption{MPC errors for three different scoring rules: \emph{baseline}, \emph{boosted}, and \emph{demoted}.
    For all genres, the genre MPC errors are largest when the genre is demoted and smallest when genre is boosted. This demonstrates how we can use MPC to detect a simple form of ranking bias.
    Confidence intervals were computed using bootstrap sampling.}
    \label{fig:mpc_boosting}
\end{figure}

\subsection{Connection of MPC to the marginal outcome test in practice}
\label{appendix:mpc_marginal}

Next, we investigate the connection between MPC and the marginal outcome test formalized in Theorem~\ref{thm:mpc_marginal_outcome}. The value function we use is the normalized discounted cumulative gain (NDCG). For each genre with positive MPC gap (group is under-valued), we compare NDCG of the ranking induced by $s$ to $\tilde s_\alpha$ to see if the objective improves once we boost that genre, as Theorem~\ref{thm:mpc_marginal_outcome} suggests. 

While NDCG does increase for some genres with positive MPC gaps, the effect is minor due to the large confidence intervals (Figure~\ref{fig:ndcg}). Four NDCGs are shown in the plot corresponding to four different scoring rules: \emph{baseline}, \emph{boosted}, and \emph{demoted} are the rules described in Appendix~\ref{appendix:mpc_bias}, \emph{calibrated} refers to scores calibrated so that the average label for a given score is roughly equal to the score and is described in more detail in Appendix~\ref{appendix:mpc_calibration}.

\begin{figure}
    \centering
    \includegraphics[width=0.8\textwidth]{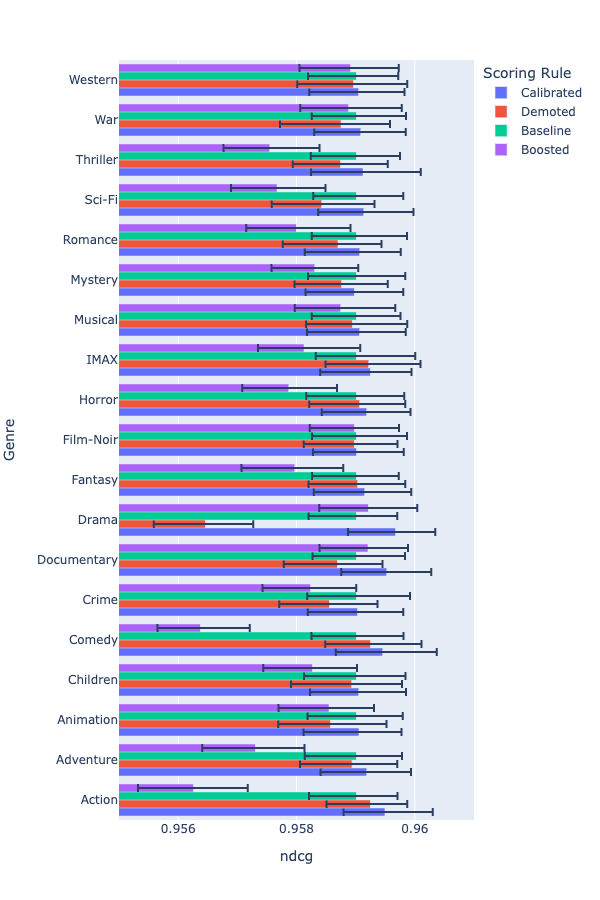}
    \caption{While NDCG typically does not move as a result of genre boosting/demoting, for all genres where NDCG does move the directions were consistent with the genre MPC gaps.
    Note that the x-axis range is very narrow.}
    \label{fig:ndcg}
\end{figure}

Based on whether the MPC gap is either positive or negative, indicating the genre is either under-valued or over-valued, we would expect that boosting that genre would cause the NDCG to go up (to reflect under-valued movies being higher ranked) or down (to reflect over-valued movies being ranked lower).
With the exception of Action, Adventure, and Comedy, the confidence intervals between the NDCGs with and without the boost overlap which prevents us from assigning an ordering of one NDCG over the other.
For Action, Adventure, and Comedy, the MPC gaps were negative indicating we over-valued movies from these genres and boosting decreased the NDCG, as we would expect.
Similarly the only significant NDCG change for demotion is with the genre Drama. With Drama, NDCG decreased, which is consistent with the positive MPC gap that indicates Drama is under-valued.

Moreover, if we look at the overall correlation between MPC and NDCG, we see that NDCG is loosely negatively correlated with absolute MPC error (Figure~\ref{fig:ndcg_correlation}). This is consistent with MPC capturing unjustifiable taste-based discrimination. 

\begin{figure}
    \centering
    \includegraphics[width=0.8\textwidth]{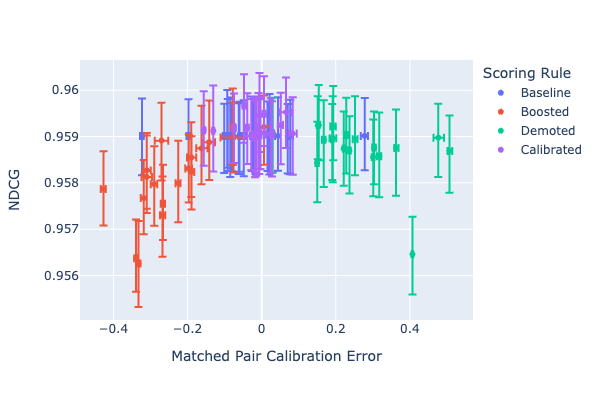}
    \caption{NDCG is loosly negatively correlated with absolute MPC error across genres.}
    \label{fig:ndcg_correlation}
\end{figure}

We conjecture that part of the reason for the lack of statistically significant results is that the NDCGs are already quite high (around 0.95) and the high number of movie genres may make it difficult to move NDCG in aggregate by boosting a single genre.
Other possibilities include the relatively small sample size or a potential violation of our assumption that the joint distribution on outcome differences depends only on score difference.

\subsection{MPC gaps in the calibrated score setting}
\label{appendix:mpc_calibration}

Finally, we investigate if MPC gaps persist once we calibrate the scores to understand the value of using MPC to detect bias in the setting where scores are calibrated.
First note that our baseline model does not produce calibrated scores on the evaluation set (Figure~\ref{fig:og_cal}).
We produce calibrated scores for each genre $g$ by fitting an isotonic regression on the evaluation data (Figure~\ref{fig:post_cal}).

\begin{figure}
    \centering
    \begin{subfigure}[t]{0.45\textwidth}
        \centering
        \includegraphics[width=\textwidth]{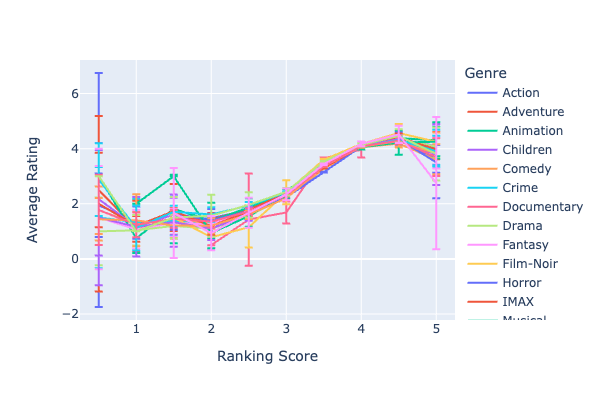}
        \caption{Original calibration of SVD model.}
        \label{fig:og_cal}
    \end{subfigure}
    \hfill
    \begin{subfigure}[t]{0.45\textwidth}
        \centering
        \includegraphics[width=\textwidth]{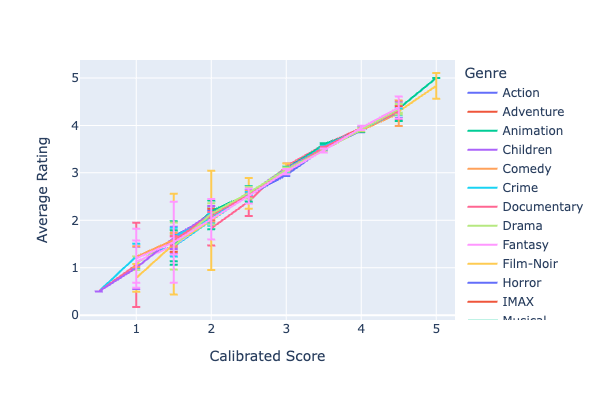}
        \caption{Calibration post calibrating.}
        \label{fig:post_cal}
    \end{subfigure}
    \caption{While the baseline model does not produce calibrated scores, they are roughly monotonic and we can effectively calibrate them to evaluate the unfairness of a calibrated ranking system.}
    \label{fig:calibrating}
\end{figure}

While fitting the regression on the evaluation data is the best case scenario for calibration and performance, we show in Figure~\ref{fig:mpc_calibration} that it doesn't remove group bias (as was the case in the synthetic example in Appendix~\ref{appendix:synthetic}), and for both Musical and Thriller genres, actually made MPC error worse.

\begin{figure}
    \centering
    \includegraphics[width=0.8\textwidth]{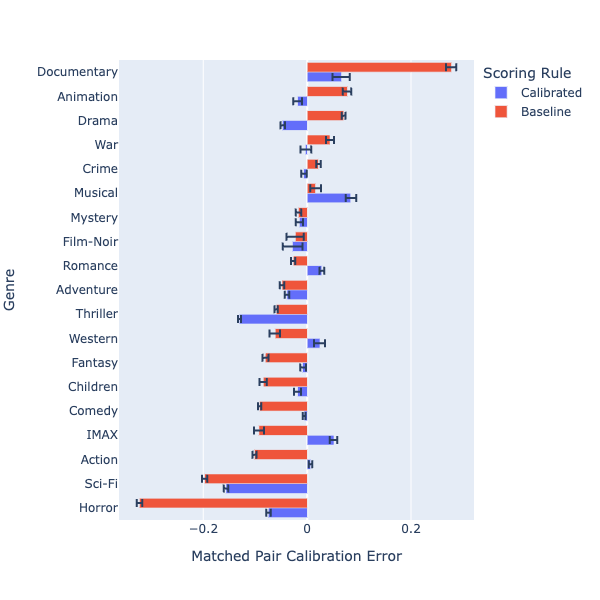}
    \caption{While MPC gaps are smaller for calibrated scores compared to non-calibrated scores, the MPC gaps often persist illustrating the power of MPC to detect bias in the setting where scores are calibrated.}
    \label{fig:mpc_calibration}
\end{figure}

\subsection{Constraining the matched pair set to adjacent items}\label{appendix:adj_pairs}
In cases where we are worried that position bias will degrade the signal obtained from labels, or in cases where additional complexity induces a non-monotonic relationship between position and ranking score, we can limit the MPC set to adjacent items. These examples also include cases where the group of interest appeared first, meaning we evaluate essential ties instead of hypothetical boosts.
The figures below demonstrates that the estimate in this case still demonstrates the desired behavior though in general confidence intervals are much wider. 

\begin{figure}
    \centering
    \includegraphics[width=0.8\textwidth]{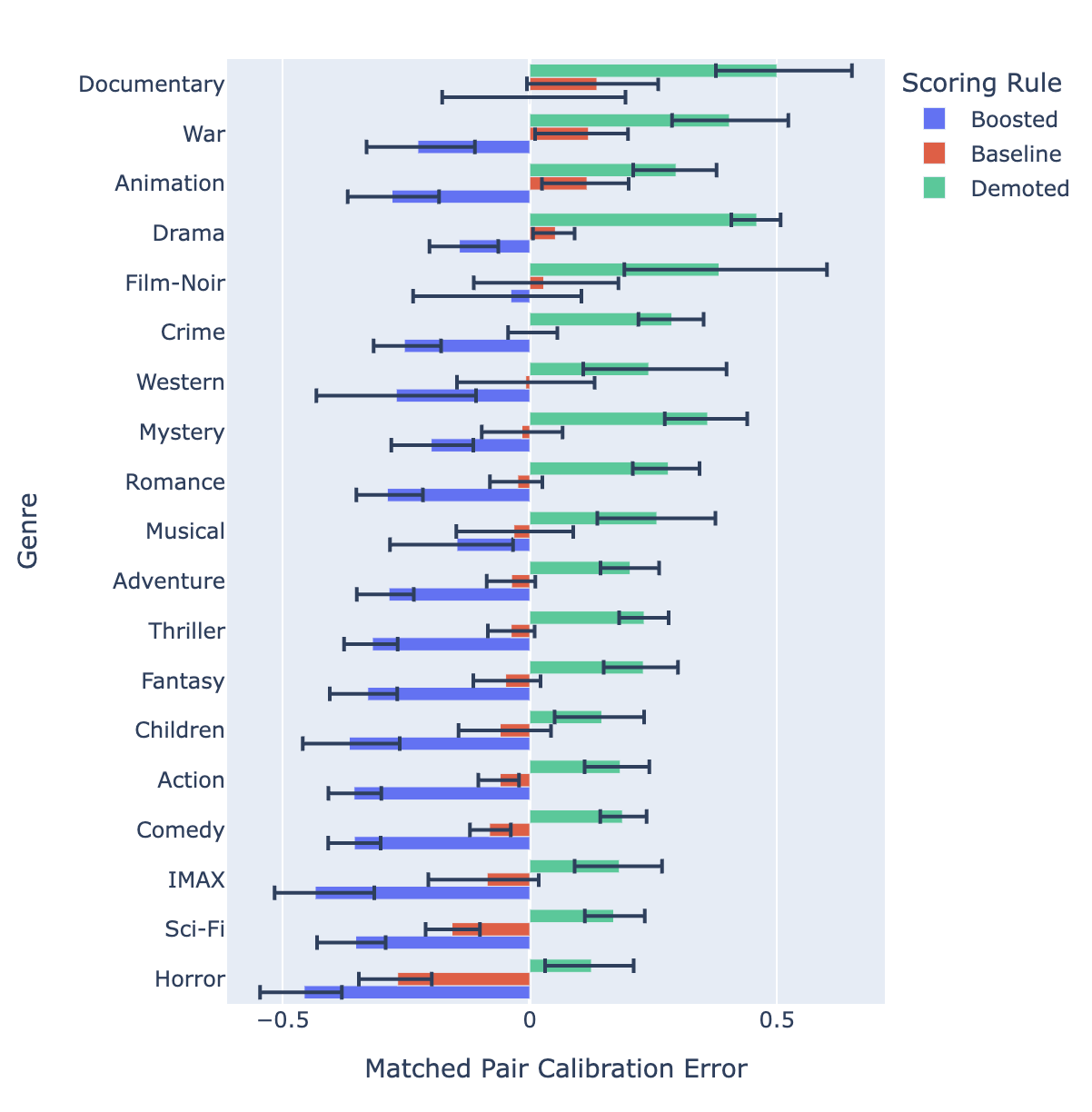}
    \caption{Adjacent MPC errors for three different scoring rules: \emph{baseline}, \emph{boosted}, and \emph{demoted}.
    For all genres, the genre MPC errors for adjacent items are largest when the genre is demoted and smallest when genre is boosted.
    Confidence intervals were computed using bootstrap sampling.}
    \label{fig:prac_mpc_boosting}
\end{figure}

\begin{figure}
    \centering
    \includegraphics[width=0.8\textwidth]{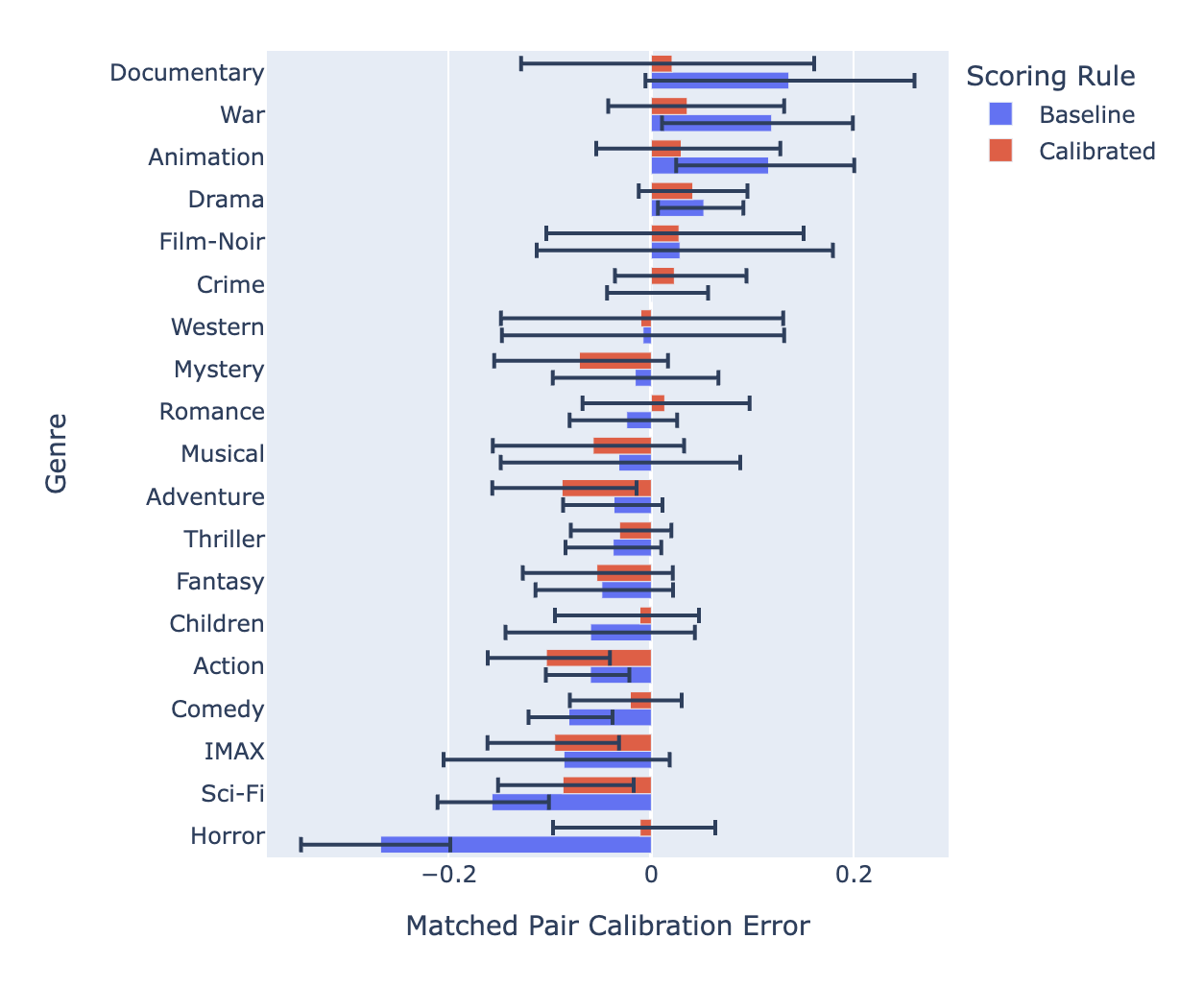}
    \caption{While adjacent MPC gaps are smaller for calibrated scores compared to non-calibrated scores. The confidence intervals are wider in this case relative to the MPC estimate without the adjacency constraint. While the direction of the estimate is consistent, we only observe statistically significant estimates for four genres post-calibration.}
    \label{fig:prac_mpc_calibration}
\end{figure}

\end{document}